\newtheorem{thm}{Theorem}
\newenvironment{proof}{{\noindent\it Proof.}\quad}{\hfill $\square$\par}
\title{Repetitive Reprediction Deep Decipher for Semi-Supervised Learning}
\author{
Guo-Hua Wang, Jianxin Wu\thanks{J. Wu is the corresponding author.}\\ 
National Key Laboratory for Novel Software Technology\\ 
Nanjing University\\
Nanjing, China\\
wangguohua@lamda.nju.edu.cn, wujx2001@nju.edu.cn 
}
\begin{document}

\maketitle

\begin{abstract}
Most recent semi-supervised deep learning (deep SSL) methods used a similar paradigm: use network predictions to update pseudo-labels and use pseudo-labels to update network parameters iteratively. However, they lack theoretical support and cannot explain why predictions are good candidates for pseudo-labels. In this paper, we propose a principled end-to-end framework named deep decipher (D2) for SSL. Within the D2 framework, we prove that pseudo-labels are related to network predictions by an exponential link function, which gives a theoretical support for using predictions as pseudo-labels. Furthermore, we demonstrate that updating pseudo-labels by network predictions will make them uncertain. To mitigate this problem, we propose a training strategy called repetitive reprediction (R2). Finally, the proposed R2-D2 method is tested on the large-scale ImageNet dataset and outperforms state-of-the-art methods by 5 percentage points.
\end{abstract}

\section{Introduction}

Deep learning has achieved state-of-the-art results on many visual recognition tasks. However, training these models often needs large-scale datasets such as ImageNet~\cite{imagenet}. Nowadays, it is easy to collect images by search engines, but image annotation is expensive and time-consuming. Semi-supervised learning (SSL) is a paradigm to learn a model with a few labeled data and massive amounts of unlabeled data. With the help of unlabeled data, the model performance may be improved.

With a supervised loss, unlabeled data can be used in training by assigning pseudo-labels to them.
Many state-of-the-art methods on semi-supervised deep learning used pseudo-labels implicitly. Temporal Ensembling~\cite{Temporal_Ensembling} used the moving average of network predictions as pseudo-labels. Mean Teacher~\cite{Mean_teacher} and Deep Co-training~\cite{DCT_2018_ECCV} employed another network to generate pseudo-labels. However, they produced or updated pseudo-labels in ad-hoc manners.
Although these methods worked well in practice, there are few theories to support them. A mystery in deep SSL arises: why can predictions work well as pseudo-labels?

In this paper, we propose an end-to-end framework called deep decipher (D2). Inspired by \cite{PENCIL}, we treat pseudo-labels as variables and update them by back-propagation, which is also learned from data. The D2 framework specifies a well-defined optimization problem, which can be properly interpreted as a maximum likelihood estimation over two set of random variables (the network parameters and the pseudo-labels). With deep decipher, we prove that there exists an exponential relationship between pseudo-labels and network predictions, leading to a theoretical support for using network predictions as pseudo-labels. Then, we further analyze the D2 framework and prove that pseudo-labels will become flat (i.e., their entropy is high) during training and there is an equality constraint bias in it. To mitigate these problems, we propose a simple but effective strategy, repetitive reprediction (R2). The improved D2 framework is named R2-D2 and obtaines state-of-the-art results on several SSL problems.

Our contributions are as follows. 
\begin{itemize}
	\item We propose D2, a deep learning framework that deciphers the relationship between predictions and pseudo-labels. D2 updates pseudo-labels by back-propagation. To the best of our knowledge, D2 is the first deep SSL method that learns pseudo-labels from data end-to-end.
	\item Within D2, we prove that pseudo-labels are exponentially transformed from the predictions. Hence, it is reasonable for previous works to use network predictions as pseudo-labels. Meanwhile, many SSL methods can be considered as special cases of D2 in certain aspects.
	\item To further boost D2's performance, we find some shortcomings of D2. In particular, we prove that pseudo-labels will become flat during the optimization. To mitigate this problem, we propose a simple but effective remedy, R2. We tested the R2-D2 method on ImageNet and it outperforms state-of-the-arts by a large margin. On small-scale datasets like CIFAR-10~\cite{cifar}, R2-D2 also produces state-of-the-art results.
\end{itemize}
\begin{figure*}[t]
	\centering
	\includegraphics[width=1.5\columnwidth]{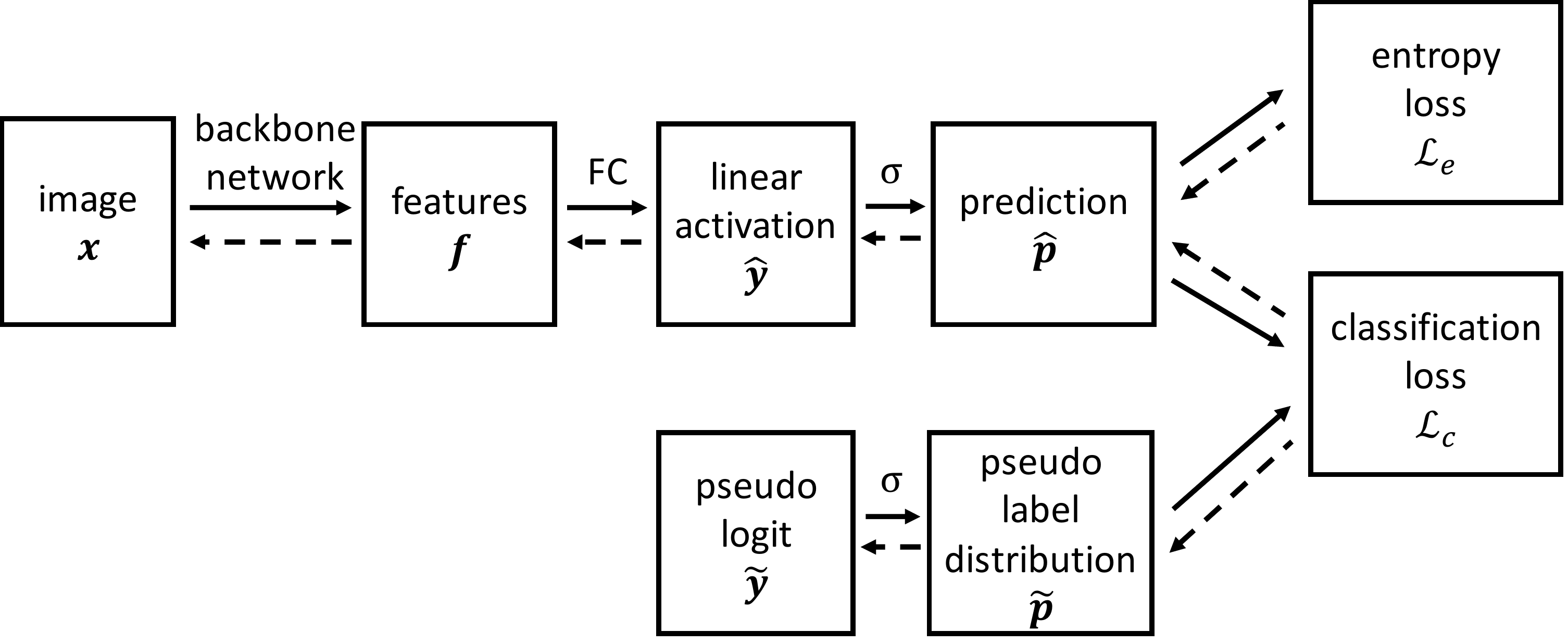}
	\caption{The pipeline of D2. Solid lines and dashed lines represent the forward and back-propagation processes, respectively.}
	\label{fig:framework}
\end{figure*}
\section{Related Works}
We first briefly review deep SSL methods and the related works that inspired this paper.

\cite{pseudo_label} is an early work on training deep SSL models by pseudo-labels, which picks the class with the maximum predicted probability as pseudo-labels for unlabeled images and tested only on a samll-scale dataset MNIST~\cite{LeNet}. Label propagation~\cite{label_prop} can be seen as a form of pseudo-labels. Based on some metric, label propagation pushes the label information of each sample to the near samples. \cite{deep_label_prop} applies label propagation to deep learning models. \cite{cvpr2019_pseudo_label} use the manifold assumption to generate pseudo-labels for unlabeled data. However, their method is complicated and relies on other SSL methods to produce state-of-the-art results.

Several recent state-of-the-art deep SSL methods can be considered as using pseudo-labels implicitly. Temporal ensembling~\cite{Temporal_Ensembling} proposes making the current prediction and the pseudo-labels consistent, where the pseudo-labels take into account the network predictions over multiple previous training epochs. Extending this idea, Mean Teacher~\cite{Mean_teacher} employs a secondary model, which uses the exponential moving average weights to generate pseudo-labels. Virtual Adversarial Training~\cite{VAT} uses network predictions as pseudo-labels, then they want the network predictions under adversarial perturbation to be consistent with pseudo-labels. Deep Co-Training~\cite{DCT_2018_ECCV} employs many networks and uses one network to generate pseudo-labels for training other networks. 

We notice that they all use the network predictions as pseudo-labels but a theory explaining its rationale is missing. With our D2 framework, we demonstrate that pseudo-labels will indeed be related to network predictions. That gives a theoretical support to using network predictions as pseudo-labels. Moreover, pseudo-labels of previous works were designed manually and ad-hoc, but our pseudo-labels are updated by training the end-to-end framework. Many previous SSL methods can also be considered as special cases of the D2 framework in certain aspects.

There are some previous works in other fields that inspired this work.
Deep label distribution learning~\cite{DLDL} inspires us to use label distributions to encode the pseudo-labels. 
\cite{Noisy_Labels} studies the label noise problem. They find it is possible to update the noisy label to make them more precise during the training. PENCIL~\cite{PENCIL} proposes an end-to-end framework to train the network and optimize the noisy labels together. Our method is inspired by PENCIL~\cite{PENCIL}. In addition, inspired by \cite{TCP}, we analyze our algorithm from the gradient perspective.

\section{The R2-D2 Method}

We define the notations first. Column vectors and matrices are denoted in bold (e.g., $\mathbf{x}, \mathbf{X}$). When $\mathbf{x}\in \mathbb{R}^d$,  $x_i$ is the $i$-th element of vector $\mathbf{x}$, $i\in[d]$, where $[d]:=\{1,2,\dots,d\}$. $\mathbf{w}_i$ denote the $i$-th column of matrix $\mathbf{W}\in\mathbb{R}^{d\times l}$, $i\in[l]$. And, we assume the dataset has $N$ classes.

\subsection{Deep decipher}

Figure~\ref{fig:framework} shows the D2 pipeline, which is inspired by \cite{PENCIL}. Given an input image $\mathbf{x}$, D2 can employ any backbone network to generate feature $\mathbf{f}\in\mathbb{R}^D$. Then, the linear activation $\mathbf{\hat{y}}\in \mathbb{R}^N$ is computed as $\mathbf{\hat{y}}=\mathbf{W}^\mathsf{T}\mathbf{f}$, where $\mathbf{W}\in\mathbb{R}^{D\times N}$ are weights of the FC layer and we omit the bias term for simplicity. The softmax function is denoted as $\sigma(\mathbf{y}):\mathbb{R}^N\rightarrow\mathbb{R}^N$ and $\sigma(\mathbf{y})_i=\frac{\exp\left(y_i\right)}{\sum_{j=1}^{N}\exp\left(y_j\right)}$. Then, the prediction $\hat{\mathbf{p}}$ is calculated as $\hat{\mathbf{p}}=\sigma(\mathbf{\hat{y}})$
, hence
\begin{equation}
\hat{p}_n
=\sigma(\mathbf{\hat{y}})_n
=\sigma(\mathbf{W}^\mathsf{T}\mathbf{f})_n
=\frac{\exp(\mathbf{w}_n^\mathsf{T}\mathbf{f})}{\sum_{i=1}^{N}\exp(\mathbf{w}_i^\mathsf{T}\mathbf{f})}\,.
\end{equation}
We define $\tilde{\mathbf{y}}$ as the pseudo logit which is an unconstrained variable and \emph{can} be updated by back-propagation. Then, the pseudo label is calculated as $\tilde{\mathbf{p}} = \sigma(\tilde{\mathbf{y}})$ and it is a probability distribution. 

In the training, the D2 framework is initialized as follows. Firstly, we train the backbone network using only labeled examples, and use this trained network as the backbone network and FC in Figure~\ref{fig:framework}. For labeled examples, $\tilde{\mathbf{y}}$ is initialized by $K\mathbf{y}$, in which $K=10$ and $\mathbf{y}$ is the groundtruth label in the one-hot encoding. Note that $\tilde{\mathbf{y}}$ of labeled examples will \emph{not} be updated during D2 training. For unlabeled examples, we use the trained network to predict $\tilde{\mathbf{y}}$. That means we use the FC layer activation $\hat{\mathbf{y}}$ as the initial value of $\tilde{\mathbf{y}}$. The process of initializing pseudo-labels is called predicting pseudo-labels in this paper.
In the testing, we use the backbone network with FC layer to make predictions and the branch of pseudo-labels is removed.

Our loss function consists of $\mathcal{L}_c$ and $\mathcal{L}_e$. $\mathcal{L}_c$ is the classification loss and defined as $KL(\hat{\mathbf{p}}||\tilde{\mathbf{p}})$ as in \cite{PENCIL}, which is different from the classic KL-loss $ KL(\tilde{\mathbf{p}}||\hat{\mathbf{p}})$. $\mathcal{L}_c$ is used to make the network predictions match the pseudo-labels. $\mathcal{L}_e$ is the entropy loss, defined as $-\sum_{j=1}^{N}\hat{p}_j\log(\hat{p}_j)$. 
Minimizing the entropy of the network prediction can encourage the network to peak at only one category. 
So our loss function is defined as
\begin{align}
\label{loss-function}
	\mathcal{L}
	&=\alpha\mathcal{L}_c + \beta\mathcal{L}_e \notag \\
	&=\alpha\sum_{j=1}^{N}\hat{p}_j\left[\log(\hat{p}_j)-\log(\tilde{p}_j)\right]-\beta\sum_{j=1}^{N}\hat{p}_j\log(\hat{p}_j)\,,
\end{align}
where $\alpha$ and $\beta$ are two hyperparameters. Although there are two hyperparameters in D2, we always set $\alpha=0.1$ and $\beta=0.03$ in all our experiments.

Then, we show that we can decipher the relationship between pseudo-labels and network predictions in D2, as shown by Theorem~\ref{thm:1}.
\begin{thm}
	\label{thm:1}
	Suppose D2 is trained by SGD with the loss function $\mathcal{L}=\alpha\mathcal{L}_c + \beta\mathcal{L}_e$. Let $\hat{\mathbf{p}}$ denote the prediction by the network for one example and $\hat{p}_n$ is the largest value in $\hat{\mathbf{p}}$. After the optimization algorithm converges, we have $\tilde{p}_n\rightarrow\exp(-\frac{\mathcal{L}}{\alpha})\left(\hat{p}_n\right)^{1-\frac{\beta}{\alpha}}$.
\end{thm}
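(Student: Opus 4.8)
The plan is to extract the stated relationship from the first-order stationarity condition that SGD enforces at convergence. Because the network can freely steer its linear activation $\hat{\mathbf{y}}$ (through $\mathbf{W}$ and the backbone), at a converged point the gradient of $\mathcal{L}$ with respect to each coordinate $\hat{y}_n$ should vanish. So the first thing I would do is rewrite the loss in the compact form $\mathcal{L}=(\alpha-\beta)\sum_{j}\hat{p}_j\log\hat{p}_j-\alpha\sum_{j}\hat{p}_j\log\tilde{p}_j$, which isolates the only two scalar quantities that will appear, namely $S_{\hat p}:=\sum_j\hat p_j\log\hat p_j$ and $S_{\tilde p}:=\sum_j\hat p_j\log\tilde p_j$.

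Next I would differentiate with respect to $\hat y_n$ using the softmax Jacobian $\partial\hat p_j/\partial\hat y_n=\hat p_j(\delta_{jn}-\hat p_n)$, treating $\tilde{\mathbf{p}}$ as fixed (it depends on the separate variable $\tilde{\mathbf{y}}$, not on $\hat{\mathbf{y}}$). Differentiating $\hat p_j\log\hat p_j$ contributes an extra $+1$ inside the bracket, but this constant is annihilated by the identity $\sum_j\hat p_j(\delta_{jn}-\hat p_n)=0$, so the derivative collapses to $\frac{\partial\mathcal{L}}{\partial\hat y_n}=\hat p_n\big[(\alpha-\beta)(\log\hat p_n-S_{\hat p})-\alpha(\log\tilde p_n-S_{\tilde p})\big]$. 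Setting this to zero at convergence and dividing by $\hat p_n$ — which is legitimate precisely because $\hat p_n$ is the largest entry and hence strictly positive — yields $\alpha\log\tilde p_n=(\alpha-\beta)\log\hat p_n-(\alpha-\beta)S_{\hat p}+\alpha S_{\tilde p}$.

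Finally I would recognize that the leftover constant terms reassemble into the loss itself: dividing the compact form of $\mathcal{L}$ by $\alpha$ gives $\mathcal{L}/\alpha=(1-\beta/\alpha)S_{\hat p}-S_{\tilde p}$, which is exactly the negative of the constant appearing above. Substituting and dividing through by $\alpha$ turns the stationarity equation into $\log\tilde p_n=(1-\frac{\beta}{\alpha})\log\hat p_n-\mathcal{L}/\alpha$, and exponentiating delivers the claim $\tilde p_n=\exp(-\mathcal{L}/\alpha)(\hat p_n)^{1-\beta/\alpha}$.

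I expect the main obstacle to be conceptual rather than computational: pinning down what ``convergence'' means. SGD only drives the dataset-averaged gradient to zero, whereas the argument above uses per-example stationarity of $\partial\mathcal{L}/\partial\hat y_n$; moreover the softmax gauge freedom forces $\sum_n\partial\mathcal{L}/\partial\hat y_n=0$ identically, so only $N-1$ of the coordinate conditions are independent, and normalization of $\tilde{\mathbf{p}}$ must be checked for consistency. One should therefore read the result as the stationary relation the network is driven toward. The restriction to the largest coordinate $\hat p_n$ both secures the division by $\hat p_n$ and supplies the interpretive payoff: since $1-\beta/\alpha>0$ for the chosen $\alpha=0.1$, $\beta=0.03$, the map $\hat p_n\mapsto\tilde p_n$ is increasing, so the pseudo-label peaks exactly where the prediction peaks — which is the precise sense in which network predictions are justified as pseudo-labels.
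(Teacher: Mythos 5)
Your proposal is correct and follows essentially the same route as the paper's proof: the same rewriting of $\mathcal{L}$ as $(\alpha-\beta)\sum_j\hat{p}_j\log\hat{p}_j-\alpha\sum_j\hat{p}_j\log\tilde{p}_j$, the same softmax-Jacobian computation, the same stationarity-at-convergence argument, and the same use of $\hat{p}_n$ being the largest entry to justify dividing by it. The only difference is cosmetic: you differentiate with respect to the logit $\hat{y}_n$ whereas the paper differentiates with respect to the FC weight column $\mathbf{w}_n$, which multiplies the identical scalar bracket by $\mathbf{f}$ and hence forces the paper to additionally assume $\mathbf{f}\neq\mathbf{0}$; your variant sidesteps that assumption, and your closing caveat about per-example versus dataset-averaged stationarity is a legitimate gap that the paper itself glosses over.
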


\begin{proof}
	First, the loss function can be rewritten by 
	\begin{align}
	\mathcal{L}
	&=(\alpha-\beta)\sum_{j=1}^{N}\sigma\left(\mathbf{W}^\mathsf{T}\mathbf{f}\right)_j\log\left(\sigma\left(\mathbf{W}^\mathsf{T}\mathbf{f}\right)_j\right) \notag \\
	&\quad
	-\alpha\sum_{j=1}^{N}\sigma\left(\mathbf{W}^\mathsf{T}\mathbf{f}\right)_j\log(\tilde{p}_j)\,.
	\end{align}
	It is easy to see
	\begin{align}
	\frac{\partial\sigma\left(\mathbf{W}^\mathsf{T}\mathbf{f}\right)_j}{\partial\mathbf{w}_n}
	&=\mathbb{I}(j=n)\sigma\left(\mathbf{W}^\mathsf{T}\mathbf{f}\right)_j\mathbf{f} \notag \\
	&\quad
	-\sigma\left(\mathbf{W}^\mathsf{T}\mathbf{f}\right)_j\sigma\left(\mathbf{W}^\mathsf{T}\mathbf{f}\right)_n\mathbf{f}\,.
	\end{align}
	Now we can compute the gradient of $\mathcal{L}$ with respect to $\mathbf{w}_n$:
	\begin{align}
	\frac{\partial\mathcal{L}}{\partial\mathbf{w}_n}
	&=
	\left[(\alpha-\beta)\log\left(\hat{p}_n\right)
	-\alpha\log(\tilde{p}_n)
	-\mathcal{L}
	\right]\hat{p}_n\mathbf{f}\,.
	\end{align}
	
	During training, we expect the optimization algorithm can converge and finally $\frac{\partial\mathcal{L}}{\partial\mathbf{w}_n}\rightarrow \mathbf{0}$. Because $\mathbf{f}$ will not be $\mathbf{0}$, we conclude that $\left[(\alpha-\beta)\log\left(\hat{p}_n\right)
	-\alpha\log(\tilde{p}_n)
	-\mathcal{L}
	\right]\hat{p}_n\rightarrow 0$. Because $\sum_{i=1}^{N}\hat{p}_i=1$, consider the fact that $\hat{p}_n$ is the largest value in $\{\hat{p}_1, \hat{p}_1, \dots, \hat{p}_N\}$, then $\hat{p}_n\not\rightarrow 0$ at the end of training. So we have $\left[(\alpha-\beta)\log\left(\hat{p}_n\right)
	-\alpha\log(\tilde{p}_n)
	-\mathcal{L}
	\right]\rightarrow 0$, which states that $\tilde{p}_n\rightarrow\exp(-\frac{\mathcal{L}}{\alpha})\left(\hat{p}_n\right)^{1-\frac{\beta}{\alpha}}$.
\end{proof}

Theorem~\ref{thm:1} tells us
$\tilde{p}_n$ converges to $\exp(-\frac{\mathcal{L}}{\alpha})\left(\hat{p}_n\right)^{1-\frac{\beta}{\alpha}}$ during the optimization. And at last, we expect that $\tilde{p}_n=\exp(-\frac{\mathcal{L}}{\alpha})\left(\hat{p}_n\right)^{1-\frac{\beta}{\alpha}}$, in which $n$ is the class predicted by the network. 

In other words, we have deciphered that there is an exponential link between pseudo-labels and predictions. From $\tilde{p}_n\rightarrow\exp(-\frac{\mathcal{L}}{\alpha})\left(\hat{p}_n\right)^{1-\frac{\beta}{\alpha}}$, we notice that $\tilde{p}_n$ is approximately proportional to $\hat{p}_n^{1-\frac{\beta}{\alpha}}$. That gives a theoretical support to use network predictions as pseudo-labels. And, it is required that $1-\frac{\beta}{\alpha}>0$ to make pseudo-labels and network predictions consistent. We must set $\alpha>\beta$. In our experiments, if we set $\alpha < \beta$, the training will indeed fail miserably.

Next, we analyze how $\tilde{\mathbf{y}}$ is updated in D2. With the loss function $\mathcal{L}$, the gradients of $\mathcal{L}$ with respect to $\tilde{y}_n$ is
\begin{equation}
\frac{\partial\mathcal{L}}{\partial\tilde{y}_n}
=-\alpha
\sigma\left(\hat{\mathbf{y}}\right)_n
+\alpha\sigma\left(\tilde{\mathbf{y}}\right)_n
\,.
\end{equation}
By gradient descent, the pseudo logit $\tilde{\mathbf{y}}$ is updated by
\begin{equation}
\label{updating-formula}
\tilde{\mathbf{y}}\leftarrow \tilde{\mathbf{y}}-\lambda\frac{\partial\mathcal{L}}{\partial\tilde{\mathbf{y}}}
=\tilde{\mathbf{y}}
-\lambda\alpha\sigma\left(\tilde{\mathbf{y}}\right)
+\lambda\alpha\sigma\left(\hat{\mathbf{y}}\right)\,,
\end{equation}
where $\lambda$ is the learning rate for updating $\tilde{\mathbf{y}}$.
The reason we use one more hyperparameter $\lambda$ rather than the overall learning rate is that $\frac{\partial\mathcal{L}}{\partial\tilde{\mathbf{y}}}=-\alpha\sigma\left(\hat{\mathbf{y}}\right)+\alpha\sigma\left(\tilde{\mathbf{y}}\right)$ is smaller than $\tilde{\mathbf{y}}$ (in part due to the sigmoid transform) and the overall learning rate is too small to update the pseudo logit.
We set $\lambda=4000$ in all our experiments.

The updating formulas in many previous works can be considered as special cases of that of D2. 
In Temporal Ensembling~\cite{Temporal_Ensembling}, the pseudo-labels $\tilde{\mathbf{p}}$ is a moving average of the network predictions $\hat{\mathbf{p}}$ during training. The updating formula is $\mathbf{P}\leftarrow\alpha\mathbf{P}+(1-\alpha)\hat{\mathbf{p}}$. To correct for the startup bias, the $\tilde{\mathbf{p}}$ is needed to be divided by factor $(1-\alpha^t)$, where $t$ is the number of epochs. So the updating formula of $\tilde{\mathbf{p}}$ is $\tilde{\mathbf{p}}\leftarrow \mathbf{P} / (1-\alpha^t)$. In Mean Teacher~\cite{Mean_teacher}, the $\tilde{\mathbf{p}}$ is the prediction of a teacher model which uses the
exponential moving average weights of the student model. \cite{Noisy_Labels} proposed using the running average of the network predictions to estimate the groundtruth of the noisy label. 
However, their updating formula were designed manually and ad-hoc. In contrast, we treat pseudo-labels as random variables like the network parameters. These variables are learned by minimizing a well-defined loss function (cf. equation \ref{loss-function}). From a probabilistic perspective, it is well known that minimizing the KL loss is equivalent to maximum likelihood estimation, in which the backbone network's architecture defines the estimation's functional space while SGD optimizes over these random variables (both the network parameters and the pseudo-labels). We do not need to manually specify how the pseudo-labels are generated. This process is natural and principled.

\subsection{A toy example}

\begin{figure*}[t] 
	\centering 
	\subfloat[]{\label{fig:mnist:sub1}\includegraphics[width=.525\columnwidth]{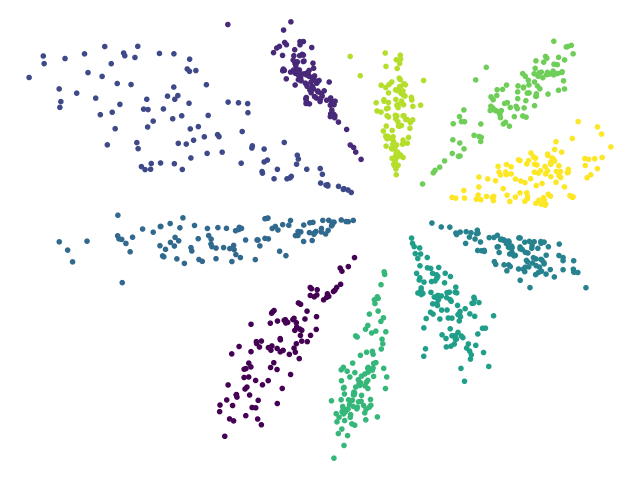}}
	\subfloat[]{\label{fig:mnist:sub2}\includegraphics[width=.525\columnwidth]{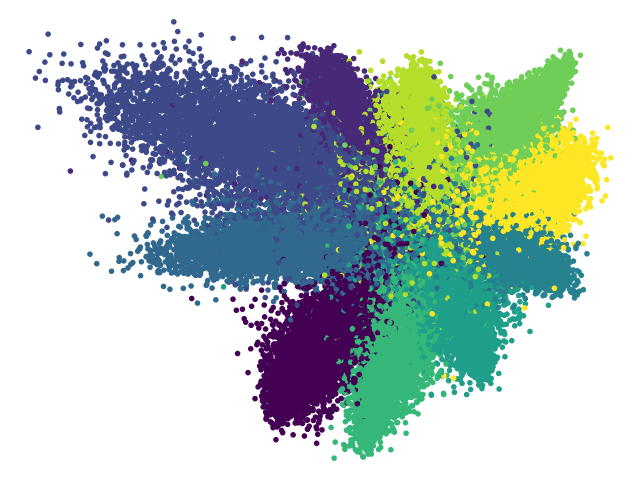}} 
	\subfloat[]{\label{fig:mnist:sub3}\includegraphics[width=.525\columnwidth]{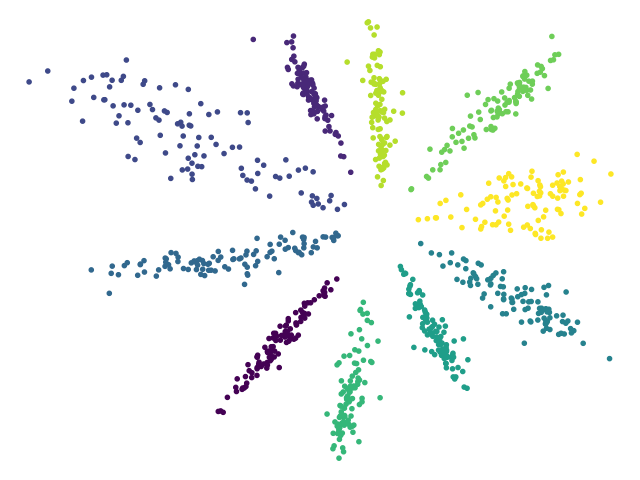}}
	\subfloat[]{\label{fig:mnist:sub4}\includegraphics[width=.525\columnwidth]{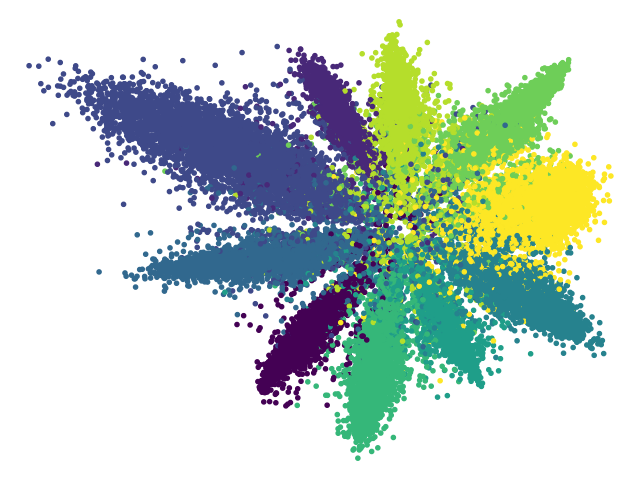}}
	\caption{Feature distribution on MNIST. First, LeNet was trained by labeled data. \protect\subref{fig:mnist:sub1} shows the the feature distribution of labeled images. Points with the same color belong to the same class. \protect\subref{fig:mnist:sub2} shows the feature distribution of both labeled and unlabeled images. Then, we used LeNet as the backbone network and trained the D2 framework. After training, \protect\subref{fig:mnist:sub3} and \protect\subref{fig:mnist:sub4} show the feature distribution of labeled images and all images, respectively. This figure needs to be viewed in color.}
	\label{fig:mnist}
\end{figure*}

Now, we use a toy example to explain how the D2 framework works intuitively. Inspired by \cite{TCP}, we use the LeNet~\cite{LeNet} as backbone structure and add two FC layers, in which the first FC layer learns a 2-D feature and the second FC layer projects the feature onto the class space. The network was trained on MNIST. Note that MNIST has 50000 images for training. We only used 1000 images as labeled images to train the network. Figure~\ref{fig:mnist:sub1} depicts the 2-D feature distribution of these 1000 images. We observe that features belonging to the same class will cluster together. Figure~\ref{fig:mnist:sub2} shows the feature distribution of both these 1000 labeled and other 49000 unlabeled images. Although the network did not train on the unlabeled images, features belonging to the same class are still roughly clustered.

Pseudo-labels in our D2 framework are probability distributions and initialized by network predictions. As Figure~\ref{fig:mnist:sub2} shows, features near the cluster center will have confident pseudo-labels and can be learned safely. However, features at the boundaries between clusters will have a pseudo-label whose corresponding distribution among different classes is flat rather than sharp. By training D2, the network will learn confident pseudo-labels first. Then it is expected that uncertain pseudo-labels will become more and more precise and confident by optimization. At last, each cluster will become more compact and the boundaries between different classes' features will become clear.  Figure~\ref{fig:mnist:sub4} depicts the feature distribution of all images after D2 training. Because the same class features of unlabeled images get closer, the same class features of labeled images will also get closer (cf. Figure~\ref{fig:mnist:sub3}). That is how unlabeled images help the training in our D2 framework.

\subsection{Repetitive reprediction}

Although D2 has worked well in practice (cf. Table~\ref{tab:ablation-result-table} column a), there are still some shortcomings in it. We will discuss two major ones. To mitigate these problems and further boost the performance, we propose a simple but effective strategy, repetitive reprediction (R2), to improve the D2 framework.

First, we expect pseudo-labels can become more confident along with D2's learning process. Unfortunately, we observed that more and more pseudo-labels become flat during training. Below, we prove Theorem~\ref{thm:2} to explain why this adverse effect happens.
\begin{thm}
	\label{thm:2}
	Suppose D2 is trained by SGD with the loss function $\mathcal{L}=\alpha\mathcal{L}_c + \beta\mathcal{L}_e$. If $\tilde{p}_n=\exp(-\frac{\mathcal{L}}{\alpha})\left(\hat{p}_n\right)^{1-\frac{\beta}{\alpha}}$, we must have $\tilde{p}_n\leq \hat{p}_n$. 
\end{thm}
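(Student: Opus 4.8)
The plan is to take logarithms of the given identity and reduce the claim $\tilde{p}_n \leq \hat{p}_n$ to a single scalar inequality about the loss value $\mathcal{L}$. Taking $\log$ of $\tilde{p}_n = \exp(-\frac{\mathcal{L}}{\alpha})(\hat{p}_n)^{1-\frac{\beta}{\alpha}}$ gives $\log\tilde{p}_n = -\frac{\mathcal{L}}{\alpha} + (1-\frac{\beta}{\alpha})\log\hat{p}_n$, so that $\log\tilde{p}_n - \log\hat{p}_n = -\frac{\mathcal{L}}{\alpha} - \frac{\beta}{\alpha}\log\hat{p}_n$. Since $\alpha > 0$ and $\log$ is monotone, proving $\tilde{p}_n \leq \hat{p}_n$ is therefore equivalent to proving $\mathcal{L} + \beta\log\hat{p}_n \geq 0$.

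Next I would establish this inequality directly from the definition of the loss. Writing $\mathcal{L} = \alpha\mathcal{L}_c + \beta\mathcal{L}_e$ and regrouping, $\mathcal{L} + \beta\log\hat{p}_n = \alpha\mathcal{L}_c + \beta(\mathcal{L}_e + \log\hat{p}_n)$. The first summand is nonnegative because $\mathcal{L}_c = KL(\hat{\mathbf{p}}\,\|\,\tilde{\mathbf{p}}) \geq 0$ by Gibbs' inequality and $\alpha > 0$. For the second summand I would exploit $\sum_{j=1}^N \hat{p}_j = 1$ to rewrite $\log\hat{p}_n = \sum_{j=1}^N \hat{p}_j\log\hat{p}_n$, which turns the entropy term into $\mathcal{L}_e + \log\hat{p}_n = \sum_{j=1}^N \hat{p}_j\log\frac{\hat{p}_n}{\hat{p}_j}$.

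The crux is then showing this last sum is nonnegative, and here the hypothesis inherited from Theorem~\ref{thm:1} --- that $\hat{p}_n$ is the largest entry of $\hat{\mathbf{p}}$ --- is exactly what is needed: for each $j$ we have $\hat{p}_n \geq \hat{p}_j$, hence $\log\frac{\hat{p}_n}{\hat{p}_j} \geq 0$, so every term $\hat{p}_j\log\frac{\hat{p}_n}{\hat{p}_j}$ is nonnegative. Combining the two nonnegative summands gives $\mathcal{L} + \beta\log\hat{p}_n \geq 0$, which by the reduction of the first paragraph is equivalent to $\tilde{p}_n \leq \hat{p}_n$. The main point to be careful about is that the naive bounds $\exp(-\frac{\mathcal{L}}{\alpha}) \leq 1$ and $(\hat{p}_n)^{1-\frac{\beta}{\alpha}} \geq \hat{p}_n$ pull in opposite directions and do not by themselves settle the inequality; the argument genuinely relies on the maximality of $\hat{p}_n$ to control the entropy term against $\log\hat{p}_n$.
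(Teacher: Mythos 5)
Your proof is correct and takes essentially the same route as the paper's: both arguments rest on the key inequality $\mathcal{L} + \beta\log\hat{p}_n \geq 0$, proved by discarding the nonnegative KL term $\alpha\mathcal{L}_c$ and bounding the entropy term using the maximality of $\hat{p}_n$ (the paper writes this as $\mathcal{L} \geq -\beta\sum_j \hat{p}_j\log\hat{p}_j \geq -\beta\log\hat{p}_n$). The only difference is presentational: you take logarithms of the identity first and verify a scalar inequality, while the paper substitutes its bound on $\mathcal{L}$ directly into the exponential formula $\tilde{p}_n=\exp\left(-\frac{\mathcal{L}}{\alpha}\right)\hat{p}_n^{1-\frac{\beta}{\alpha}}$.
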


\begin{proof}
	First, according to the loss function we defined, we have 
	\begin{align}
	\mathcal{L}
	&=\alpha\sum_{j=1}^{N}\hat{p}_j\left[\log(\hat{p}_j)-\log(\tilde{p}_j)\right]-\beta\sum_{j=1}^{N}\hat{p}_j\log(\hat{p}_j)\\
	&\geq-\beta\sum_{j=1}^{N}\hat{p}_j\log(\hat{p}_j) 
	\geq-\beta\sum_{j=1}^{N}\hat{p}_j\log(\hat{p}_n) \\
	&=-\beta\log(\hat{p}_n)\,,
	\end{align}
	where $\hat{p}_n$ is the largest value in $\{\hat{p}_1,\hat{p}_2,\dots,\hat{p}_N\}$. Then, from $\tilde{p}_n=\exp\left(-\frac{\mathcal{L}}{\alpha}\right)\hat{p}_n^{1-\frac{\beta}{\alpha}}$ and $\mathcal{L}\geq -\beta\log(\hat{p}_n)$, we have
	\begin{equation}
	\tilde{p}_n
	=\exp\left(-\frac{\mathcal{L}}{\alpha}\right)\hat{p}_n^{1-\frac{\beta}{\alpha}}
	\leq\exp\left(\frac{\beta\log(\hat{p}_n)}{\alpha}\right)\hat{p}_n^{1-\frac{\beta}{\alpha}}
	=\hat{p}_n\,.
	\end{equation}
\end{proof}

From Theorem~\ref{thm:1}, we get $\tilde{p}_n\rightarrow\exp(-\frac{\mathcal{L}}{\alpha})\left(\hat{p}_n\right)^{1-\frac{\beta}{\alpha}}$,  where $\hat{\mathbf{p}}$ gets the largest value at $\hat{p}_n$. And Theorem~\ref{thm:2} tells us if $\tilde{p}_n=\exp(-\frac{\mathcal{L}}{\alpha})\left(\hat{p}_n\right)^{1-\frac{\beta}{\alpha}}$ then $\tilde{p}_n$ will be smaller than $\hat{p}_n$. Because $\tilde{\mathbf{p}}$ and $\hat{\mathbf{p}}$ are probability distributions, if $\tilde{\mathbf{p}}$ and $\hat{\mathbf{p}}$ get their largest value at $n$, $\tilde{\mathbf{p}}$ is more flat than $\hat{\mathbf{p}}$ when $\tilde{p}_n\leq \hat{p}_n$. That is, along with the training of D2, there is a tendency that pseudo-labels will be more flat than the network predictions.

Second, we find an unsolicited bias in the D2 framework. From the updating formula, we can get 
\begin{align}
\sum_{i=1}^{N}\tilde{y}_i
&\leftarrow
\sum_{i=1}^{N}\tilde{y}_i
-\lambda\alpha\sum_{i=1}^{N}\sigma\left(\tilde{\mathbf{y}}\right)_i
+\lambda\alpha\sum_{i=1}^{N}\sigma\left(\hat{\mathbf{y}}\right)_i \\
&=\sum_{i=1}^{N}\tilde{y}_i-\lambda\alpha+\lambda\alpha =\sum_{i=1}^{N}\tilde{y}_i\,.
\end{align}
That is, $\sum_{i=1}^{N}\tilde{y}_i$ will \emph{not} change after initialization. Although we define $\tilde{\mathbf{y}}$ as the variable which is not constrained, the softmax function and SGD set an equality constraint for it. On the other hand, in practice, $\sum_{i=1}^{N}\hat{y}_i$ become more and more concentrated. Later, we will use an ablation study to demonstrate this bias is harmful.

We propose a repetitive reprediction (R2) strategy to overcome these difficulties, which repeatedly perform repredictions (i.e., using the prediction $\tilde{\mathbf{y}}$ to re-initialize the pseudo-labels $\tilde{\mathbf{y}}$ several times) during training D2. The benefits of R2 are two-fold. First, we want to make pseudo-labels confident. According to our analysis, the network predictions are sharper than pseudo-labels when the algorithm converges. So repredicting pseudo-labels can make them sharper.
Second, $\sum_{i=1}^{N}\tilde{y}_i$ will not change during D2 training. Reprediction can reduce the impact of this bias.
Furthermore, the validation accuracy often increase during training. A repeated reprediction can make pseudo-labels more accurate than that of the last reprediction. 

Apart from the repredictions, we also reduce the learning rate to boost the performance. If the D2 framework is trained by a fixed learning rate as in \cite{PENCIL}, the loss $\mathcal{L}$ did not descend in experiments. Reducing the learning rate can make the loss descend. We can get some benefits from a lower loss. On one hand, $\mathcal{L}_c$ is the KL divergence between pseudo-labels and the network predictions. Minimizing this term makes pseudo-labels as sharp as the network predictions. On the other hand, minimizing $\mathcal{L}_e$ can decrease the entropy of network predictions. So when it comes to next reprediction, pseudo-labels will be more confident according to sharper predictions. 

Finally, repredicting pseudo-labels frequently is harmful for performance. By using the R2 strategy every epoch, the network predictions and pseudo-labels are always the same and D2 cannot optimize pseudo-labels anymore. In CIFAR-10 experiments, we repredict pseudo-labels every 75 epochs and reduce the learning rate after each reprediction. Using the R2 strategy can make pseudo-labels more confident at the end of training.

\subsection{The overall R2-D2 algorithm}

Now we propose the overall R2-D2 algorithm. The training can be divided into three stages. In the first stage, we only use labeled images to train the backbone network with cross entropy loss as in common network training. In the second stage, we use the backbone network trained in the first stage to predict pseudo-labels for unlabeled images. Then we use D2 to train the network and optimize pseudo-labels together. It is expected that this stage can boost the network performance and make pseudo-labels more precise. But according to our analysis, it is not enough to train D2 by only one stage. With the R2 strategy, D2 will be repredicted and trained for several times. In the third stage, the backbone network is finetuned by all images whose labels come from the second stage. For unlabeled images, we pick the class which has the maximum value in pseudo-labels and use the cross entropy loss to train the network. And pseudo-labels are not updated anymore. For labeled images, we use their groundtruth labels.

In general, R2-D2 is a simple method. It requires only one single network (versus two in Mean Teacher and the loss function consists of two terms (versus three in Mean Teacher). The training processes in different stages are identical (share the same code), just need to change the value of two switch variables.

\section{Experiments}

In this section, we use four datasets to evaluate our algorithm: ImageNet~\cite{imagenet}, CIFAR-100~\cite{cifar}, CIFAR-10~\cite{cifar}, SVHN~\cite{SVHN}. We first use an ablation study to investigate the impact of the R2 strategy. We then report the results on these datasets to compare with state-of-the-arts. All experiments were implemented using the PyTorch framework and run on a computer with TITAN Xp GPU.

\subsection{Implementation details}

Note that we trained the network using stochastic gradient descent with Nesterov momentum 0.9 in all experiments. We set $\alpha = 0.1$, $\beta=0.03$ and $\lambda=4000$ on \emph{all} datasets, which shows the robustness of our method to these hyperparameters. Other hyperparameters (e.g., batch size, learning rate, and weight decay) were set according to different datasets. 

\textbf{ImageNet} is a large-scale dataset with natural color images from 1000 categories. Each category typically has 1300 images for training and 50 for evaluation. Following the prior work~\cite{DCT_2018_ECCV,Stochastic_Transformations,VAE,Mean_teacher}, we uniformly choose 10\% data from training images as labeled data. That means there are 128 labeled data for each category. The rest of training images are considered as unlabeled data. We test our model on the validation set. The backbone network is ResNet-18~\cite{ResNet}.

\textbf{CIFAR-100} contains $32\times 32$ natural images from 100 categories. There are 50000 training images and 10000 testing images in CIFAR-100. Following \cite{Temporal_Ensembling,DCT_2018_ECCV,cvpr2019_pseudo_label}, we use 10000 images (100 per class) as labeled data and the rest 40000 as unlabeled data. We report the error rates on the testing images. The backbone network is ConvLarge~\cite{Temporal_Ensembling}. 

\textbf{CIFAR-10} contains $32\times 32$ natural images from 10 categories. Following \cite{Temporal_Ensembling,VAT,Mean_teacher,DCT_2018_ECCV,HybridNet}, we use 4000 images (400 per class) from 50000 training images as labeled data and the rest images as unlabeled data. We report the error rates on the full 10000 testing images. The backbone network is Shake-Shake~\cite{shakeshake}.

\textbf{SVHN} consists of $32\times 32$ house number images belonging to 10 classes. The category of each image is the centermost digit. There are 73257 training images and 26032 testing images in SVHN. Following \cite{Temporal_Ensembling,Mean_teacher,VAT,DCT_2018_ECCV}, we use 1000 images (100 per class) as labeled data and the rest 72257 training images as unlabeled data. The backbone network is ConvLarge~\cite{Temporal_Ensembling}.

\subsection{Ablation studies}

\begin{table}
	\caption{Ablation studies when using different strategies to train our end-to-end framework.}
	\label{tab:ablation-result-table}
	\centering
	\small
	\begin{tabular}{cccccc}
		\toprule
		& a & b & c & d & e \\
		\midrule
		The 2nd stage 			& $\checkmark$ 	& $\checkmark$  & $\checkmark$ & $\checkmark$ &$\checkmark$  \\
		Repeat the 2nd stage	&				& $\checkmark$  & $\checkmark$ & $\checkmark$ &$\checkmark$  \\
		Reprediction			&  				& 				& $\checkmark$ & 			 &$\checkmark$  \\
		Reducing LR 			&  				& 				&			  & $\checkmark$ &$\checkmark$  \\
		Error rates (\%) 		& 6.71 			& 6.37 			& 6.23		   & 5.94		 & 5.78 \\
		\bottomrule
	\end{tabular}
\end{table}

Now we validate our framework by an ablation study on CIFAR-10 with the Shake-Shake backbone and 4000 labeled images. All experiments used the same data splits and ran once. And they all used the first stage to initialize D2 and the third stage to finetune the network. Table~\ref{tab:ablation-result-table} presents the results and the error rates are produced by the last epoch of the third stage. Different columns denote using different strategies to train D2 in the second stage.
First, without R2 (column a), the error rate of a basic D2 learning is $6.71\%$, which is already competitive with state-of-the-arts. Next, we repeated the second stage without reprediction or reducing learning rate (column b). That means the network is trained by the first stage, the second stage, repeat the second stage, and the third stage. This network achieved a $6.37\%$ error rate, which demonstrates training D2 for more epochs can boost performance and the network will not overfit easily. Repeating the second stage with reprediction (column c) could make the error rate even lower, to $6.23\%$. But, without reducing the learning rate, $\mathcal{L}$ did not decrease. On the other hand, repeating the second stage and reducing the learning rate (column d) can get better results ($5.94\%$). However, only reducing the learning rate cannot remove the impact of the equality constraint bias. At last, applying both strategies (column e) improved the results by a large margin to $5.78\%$.

\begin{table}
	\caption{Ablation studies when using different $\alpha$ to train our end-to-end framework. ($\beta=0.03$)}
	\label{tab:ablation-alpha}
	\centering
	\small
	\begin{tabular}{cccccc}
		\toprule
		$\alpha$ & 0.1 & 0.2 & 0.3 & 0.4 & 0.5 \\
		\midrule
		Error rates (\%) & 5.78 & 5.44 & 5.81 & 5.90 & 6.11 \\
		\bottomrule
	\end{tabular}
\end{table}

\begin{table}
	\caption{Ablation studies when using different $\beta$ to train our end-to-end framework. ($\alpha=0.1$)}
	\label{tab:ablation-beta}
	\centering
	\small
	\begin{tabular}{cccccc}
		\toprule
		$\beta$ & 0.01 & 0.02 & 0.03 & 0.04 & 0.05 \\
		\midrule
		Error rates (\%) & 5.62 & 5.75 & 5.78 & 5.83 & 5.76 \\
		\bottomrule
	\end{tabular}
\end{table}

Table~\ref{tab:ablation-alpha} presents the results with different $\alpha$. $\beta$ is $0.03$ in all experiments. We find setting $\alpha=0.2$ will achieve a better performance and a large $\alpha$ may degrade the performance. Table~\ref{tab:ablation-beta} shows the results with different $\beta$ when setting $\alpha=0.1$. Compared with $\alpha$, our method is robust to $\beta$. The highest error rate is $5.83\%$ and the lowest error rate is $5.62\%$. There is only a roughly $0.2\%$ difference between them. Overall, R2-D2 is robust to these hyperparameters. And when apply R2-D2, we suggest that $\alpha=0.1, \beta=0.03$ is a safe starting point to tune these hyperparameters. All experiments in the rest of our paper used $\alpha=0.1, \beta=0.03$. Please note that we did not carefully tune these hyperparameters. Error rates of R2-D2 may be lower than those reported in this paper if we tune them carefully.

\begin{table}
	\caption{Ablation studies when using different $\mathcal{L}_c$ to train our end-to-end framework. ($\alpha=0.1,\beta=0.03$)}
	\label{tab:ablation-loss}
	\centering
	\small
	\begin{tabular}{cccccc}
		\toprule
		$\mathcal{L}_c$ & $KL(\hat{\mathbf{p}}||\tilde{\mathbf{p}})$ & $KL(\tilde{\mathbf{p}}||\hat{\mathbf{p}})$ & $\|\tilde{\mathbf{p}} - \hat{\mathbf{p}}\|_2^2$ \\
		\midrule
		Error rates (\%) & 5.78 & 8.06 & 6.35 \\
		\bottomrule
	\end{tabular}
\end{table}

Table~\ref{tab:ablation-loss} shows the results with different $\mathcal{L}_c$. Note that our loss function is defined as $\mathcal{L}=\alpha\mathcal{L}_c + \beta\mathcal{L}_e$. The loss function determines how the network parameters and pseudo-labels update. That means different $\mathcal{L}_c$ result in different updating formulas of pseudo-labels. The default $\mathcal{L}_c$ is $KL(\hat{\mathbf{p}}||\tilde{\mathbf{p}})$ and the updating formula is Equation~\ref{updating-formula}. When set $\mathcal{L}_c=KL(\tilde{\mathbf{p}}||\hat{\mathbf{p}})$, the gradients of $\mathcal{L}$ with respect to $\tilde{\mathbf{y}}_n$ is 
\begin{equation}
\frac{\partial\mathcal{L}}{\partial\tilde{y}_n}
=\alpha
\tilde{\mathbf{p}}_n
[\log \tilde{\mathbf{p}}_n
- \log \hat{\mathbf{p}}_n
- \sum_{k=1}^{N}\tilde{\mathbf{p}}_k(\log\tilde{\mathbf{p}}_k - \log\hat{\mathbf{p}}_k)]
\,,
\end{equation}
where $\hat{\mathbf{p}} = \sigma(\hat{\mathbf{y}})$ and $\tilde{\mathbf{p}} = \sigma(\tilde{\mathbf{y}})$. With $\mathcal{L}_c=\|\tilde{\mathbf{p}} - \hat{\mathbf{p}}\|_2^2$, the gradients of $\mathcal{L}$ with respect to $\tilde{\mathbf{y}}_n$ is 
\begin{equation}
\frac{\partial\mathcal{L}}{\partial\tilde{y}_n}
=2\alpha
\tilde{\mathbf{p}}_n
[\tilde{\mathbf{p}}_n
- \hat{\mathbf{p}}_n
- \sum_{k=1}^{N}\tilde{\mathbf{p}}_k(\tilde{\mathbf{p}}_k - \hat{\mathbf{p}}_k)]
\,.
\end{equation}
The experimental results demonstrates superior performance of R2-D2 with $\mathcal{L}_c = KL(\hat{\mathbf{p}}||\tilde{\mathbf{p}})$. It obtains $2.28\%$ lower error rate than $KL(\tilde{\mathbf{p}}||\hat{\mathbf{p}})$ and $0.57\%$ lower error rate than $\|\tilde{\mathbf{p}} - \hat{\mathbf{p}}\|_2^2$.

\subsection{Results on ImageNet}

\begin{table*}[!ht]
	\caption{Error rates (\%) on the validation set of ImageNet benchmark with 10\% images labeled. ``-'' means that the original papers did not report the corresponding error rates.}
	\label{tab:ImageNet-result-table}
	\centering
	\small
	\begin{tabular}{c|llccc}
		\toprule
		&
		Method						& Backbone		&	\#Param	&	Top-1	&	Top-5	\\
		\midrule
		\multirow{2}*{Supervised} &
		100\% Supervised			
		&	ResNet-18	&	11.6M	&	30.43	&  10.76	\\
		&
		10\% Supervised	
		&	ResNet-18	&	11.6M	&	52.23	&	27.54	\\
		\midrule
		\multirow{5}*{Semi-supervised} &
		Stochastic Transformations
		&	AlexNet		&	61.1M	&     -		& 39.84		\\
		&							
		VAE with 10\% Supervised
		&	Customized	&	30.6M	&	51.59	&	35.24	\\
		&							
		Mean Teacher				
		&	ResNet-18	&	11.6M	&	49.07	&  23.59	\\
		&							
		Dual-View Deep Co-Training
		&	ResNet-18	&	11.6M	&	46.50	&	22.73	\\	
		&
		R2-D2					
		&	ResNet-18	&	11.6M	&\textbf{41.55}&\textbf{19.52}\\
		\bottomrule
	\end{tabular}
\end{table*}

\begin{table*}
	\caption{Error rates (\%) on CIFAR-100 benchmark with 10000 images labeled.}
	\label{tab:CIFAR-100-result-table}
	\centering
	\small
	\begin{tabular}{c|lcr@{.}l}
		\toprule
		&Method							&	Backbone	&	\multicolumn{2}{c}{Error rates (\%)}		\\
		\midrule
		\multirow{2}*{Supervised} &
		
		100\% Supervised				&	ConvLarge		&	$26$&$42 \pm 0.17$ 	\\
		&
		Using 10000 labeled images only	&	ConvLarge		&	$38$&$36 \pm 0.27$	\\
		\midrule
		\multirow{6}*{Semi-supervised} &		
		Temporal Ensembling				&	ConvLarge		&	$38$&$65 \pm 0.51$	\\
		&
		LP								&	ConvLarge		&	$38$&$43 \pm 1.88$	\\
		&
		Mean Teacher					&	ConvLarge		&	$36$&$08 \pm 0.51$	\\
		&
		LP + Mean Teacher				&	ConvLarge		&	$35$&$92 \pm 0.47$	\\
		&
		DCT								&	ConvLarge		&	$34$&$63 \pm 0.14$	\\
		&
		R2-D2 							&	ConvLarge		&$\textbf{32}$&$\textbf{87}\pm\textbf{0.51}$\\
		\bottomrule
	\end{tabular}
\end{table*}

Table~\ref{tab:ImageNet-result-table} shows our results on ImageNet with 10\% labeled samples. The setup followed that in \cite{DCT_2018_ECCV}. The image size in training and testing is $224\times 224$. For the fairness of comparisons, the error rate is from single model without ensembling. We use the result of the last epoch. Our experiment is repeated three times with different random subsets of labeled training samples. The Top-1 error rates are $41.64$, $41.35$, and $41.65$, respectively. The Top-5 error rates are $19.53$, $19.60$, and $19.44$, respectively. R2-D2 achieves significantly lower error rates than Stochastic Transformations \cite{Stochastic_Transformations} and VAE~\cite{VAE}, albeit they used the larger input size $256\times 256$. With the same backbone and input size, R2-D2 obtains roughly $5\%$ lower Top-1 error rate than that of DCT~\cite{DCT_2018_ECCV} and $7.5\%$ lower Top-1 error rate than that of Mean Teacher~\cite{Mean_teacher}. R2-D2 outperforms the previous state-of-the-arts by a large margin. The performances of Mean Teacher~\cite{Mean_teacher} with ResNet-18~\cite{ResNet} is quoted from \cite{DCT_2018_ECCV}.

\subsection{Results on CIFAR-100}

Table~\ref{tab:CIFAR-100-result-table} presents experimental results on CIFAR-100 with 10000 labeled samples. All methods used ConvLarge for fairness of comparisons and did not use ensembling. The error rate of R2-D2 is the average error rate of the last epoch over five random data splits. The results of 100\% Supervised is quoted from \cite{Temporal_Ensembling}. Using 10000 labeled images achieved $38.36\%$ error rates in our experiments. With unlabeled images, R2-D2 produced a $32.87\%$ error rate, which is lower than others (e.g., Temporal Ensembling, LP~\cite{cvpr2019_pseudo_label}, Mean Teacher, LP + Mean Teacher~\cite{cvpr2019_pseudo_label}, and DCT). The performances of Mean Teacher~\cite{Mean_teacher} is quoted from \cite{cvpr2019_pseudo_label}.

\subsection{Results on CIFAR-10}

\begin{table}
	\caption{Error rates (\%) on CIFAR-10 benchmark with 4000 images labeled.}
	\label{tab:CIFAR-10-result-table}
	\centering
	\small
	\begin{tabular}{lcr@{.}l}
		\toprule
		Method			&	Backbone	&	\multicolumn{2}{c}{Error rates (\%)}	\\
		\midrule
		
		100\% Supervised				&	Shake-Shake		&	$2$&$86$			\\
		
		Only 4000 labeled images 		&	Shake-Shake		&	$14$&$90\pm0.28$	\\
		
		\midrule
		
		Mean Teacher					&	ConvLarge		&	$12$&$31\pm0.28$	\\
		
		Temporal Ensembling				&	ConvLarge		&	$12$&$16\pm0.24$	\\
		
		VAT+EntMin						&	ConvLarge		&	$10$&$55\pm0.05$	\\
		
		DCT with 8 Views				&	ConvLarge		&	$8$&$35\pm0.06$		\\
		
		Mean Teacher					&	Shake-Shake		&	$6$&$28\pm0.15$		\\
		
		HybridNet						&	Shake-Shake		&	$6$&$09$			\\
		
		R2-D2 							&	Shake-Shake		&$\textbf{5}$&$\textbf{72}\pm\textbf{0.06}$	\\
		\bottomrule
	\end{tabular}
\end{table}

We evaluated the performance of R2-D2 on CIFAR-10 with 4000 labeled samples. Table~\ref{tab:CIFAR-10-result-table} presents the results. Following \cite{Mean_teacher,HybridNet}, we used the Shake-Shake network~\cite{shakeshake} as the backbone network. Overall, using Shake-Shake backbone network can achieves lower error rates than using ConvLarge. Our experiment was repeated five times with different random subsets of labeled training samples. We used the test error rates of the last epoch. After the first stage, the backbone network produced the error rates 14.90\%, which is our baseline using 4000 labeled samples. With the help of unlabeled images, R2-D2 obtains an error rate of $5.72\%$.Compared with Mean Teacher~\cite{Mean_teacher} and HybridNet~\cite{HybridNet}, R2-D2 achieves lower error rate and produces state-of-the-art results. 

\subsection{Results on SVHN}

\begin{table}
	\caption{Error rates (\%) on SVHN benchmark with 1000 images labeled.}
	\label{tab:SVHN-result-table}
	\centering
	\small
	\begin{tabular}{lcr@{.}l}
		\toprule
		Method						&	Backbone	&	\multicolumn{2}{c}{Error rates (\%)}		\\
		\midrule

		100\% Supervised			&	ConvLarge	&	$2$&$88\pm0.03$	\\

		Only 1000 labeled images 	&	ConvLarge	&	$11$&$27\pm0.85$ \\
		\midrule
		Temporal Ensembling			&	ConvLarge	&	$4$&$42\pm0.16$	\\
		
		VAdD (KL)					&	ConvLarge	&	$4$&$16\pm0.08$ \\

		Mean Teacher				&	ConvLarge	&	$3$&$95\pm0.19$	\\	

		VAT+EntMin					&	ConvLarge	&	$3$&$86\pm0.11$	\\
		
		VAdD (KL) + VAT 			&	ConvLarge	&	$3$&$55\pm0.05$ \\

		DCT with 8 Views			&	ConvLarge	&$\textbf{3}$&$\textbf{29}\pm\textbf{0.03}$\\

		R2-D2						&	ConvLarge	&	$3$&$64\pm0.20$	\\
		\bottomrule
	\end{tabular}
\end{table}

We tested R2-D2 on SVHN with 1000 labeled samples. The results are shown in Table~\ref{tab:SVHN-result-table}.
Following previous works~\cite{Temporal_Ensembling,Mean_teacher,VAT,DCT_2018_ECCV}, we used the ConvLarge network as the backbone network. The result we report is average error rate of the last epoch over five random data splits. On this task, the gap between 100\% supervised and many SSL methods (e.g., VAT+EntMin~\cite{VAT}, VAdD (KL)+VAT~\cite{VAdD}, Deep Co-Training~\cite{DCT_2018_ECCV}, and R2-D2) is less than 1\%. Only Deep Co-Training with 8 Views~\cite{DCT_2018_ECCV} and VAdD (KL)+VAT slightly outperform R2-D2. Compared with other methods (e.g., Temporal Ensembling, Mean Teacher, and VAT, R2-D2 produces a lower error rate. Note that on the large-scale ImageNet, R2-D2 significantly outperformed Deep Co-Training. VAdD have not be evaluated on ImageNet in their paper.

\section{Conclusion}

In this paper, we proposed R2-D2, a method for semi-supervised deep learning. D2 uses label probability distributions as pseudo-labels for unlabeled images and optimizes them during training. Unlike previous SSL methods, D2 is an end-to-end framework, which is independent of the backbone network and can be trained by back-propagation. Based on D2, we give a theoretical support for using network predictions as pseudo-labels. However, pseudo-labels will become flat during training. We analyzed this problem both theoretically and experimentally, and proposed the R2 remedy for it. At last, we tested R2-D2 on different datasets. The experiments demonstrated superior performance of our proposed methods. On large-scale dataset ImageNet, R2-D2 achieved about $5\%$ lower error rates than that of previous state-of-the-art. In the future, we will explore the combination of unsupervised feature learning and semi-supervised learning.

\section{Acknowledgments}

This work is supported by the National Natural Science Foundation of China (61772256, 61921006).

\bibliography{AAAI-WangG}
\bibliographystyle{aaai}
\end{document}